\newtheorem{theorem}{Theorem}
\title{Safety-Critical Control for Robotic Manipulators using Collision Cone Control Barrier Functions}
\author{Lucas Henrique Almeida\\University of S\~{a}o Paulo (USP)}
\date{}
\begin{document}

\maketitle

\begin{abstract}
This paper presents a comprehensive approach for the safety-critical control of robotic manipulators operating in dynamic environments. Building upon the framework of Control Barrier Functions (CBFs), we extend the collision cone methodology to formulate Collision Cone Control Barrier Functions (C3BFs) specifically tailored for manipulators. In our approach, safety constraints derived from collision cone geometry are seamlessly integrated with Cartesian impedance control to ensure compliant yet safe end-effector behavior. A Quadratic Program (QP)-based controller is developed to minimally modify the nominal control input to enforce safety. Extensive simulation experiments demonstrate the efficacy of the proposed method in various dynamic scenarios.
\end{abstract}

\section{Introduction}
The safety of robotic systems is an ever-increasing concern, particularly in applications where manipulators operate in environments populated by dynamic obstacles or interact closely with humans. Traditional trajectory tracking methods, which are designed to optimize performance metrics, may inadvertently lead to unsafe behaviors when unexpected obstacles are encountered. Many control strategies have been proposed to ensure system safety. These include methods such as artificial potential fields \cite{8022685,Singletary2021ComparativeAO}, reference governors \cite{6859176}, reachability analysis \cite{8263977,RA-UAV,https://doi.org/10.48550/arxiv.2106.13176}, and nonlinear model predictive control \cite{8442967,yu_mpc_aut_ground_vehicle}. However, to formally guarantee safety—specifically collision avoidance with obstacles—it is essential to utilize a safety-critical controller that places safety above tracking performance. In this context, Control Barrier Functions (CBFs) \cite{Ames_2017} have emerged as a promising approach. These functions define a safe set using inequality constraints and reformulate the safety condition as a quadratic programming problem to ensure that the system's state remains within the safe region over time. In parallel, the concept of collision cones \cite{doi:10.2514/1.G005879,709600,Fiorini1993}, which originated in the context of vehicle collision avoidance, has been successfully integrated with CBFs to yield Collision Cone Control Barrier Functions (C3BFs)\cite{tayal2024collision}. This approach constrains the relative velocity between the robotic agent and obstacles so that the latter remains outside a critical cone, thereby avoiding collisions. While previous work on C3BFs has primarily targeted unmanned aerial \cite{tayal2024control}, ground vehicles\cite{goswami2024collision} and legged robots \cite{tayal2023safe}, our work extends this methodology to robotic manipulators. In this paper, we integrate the collision cone-based safety constraints with Cartesian impedance control \cite{1242165}, a strategy commonly used for compliant manipulation, and derive a QP-based safety filter that modifies the nominal control input only when necessary. 
Although, there has been a lot of work done on using CBFs for manipulators \cite{9636794, WANG2022361, singletary2021safety}, but nothing has been done using C3BFs, which has shown to outperform state of the art Higher Order CBFs \cite{xiao2019control}.

The remainder of this paper is organized as follows. We begin by providing detailed background on manipulator dynamics, CBFs, and collision cone concepts. We then formulate the problem in the context of Cartesian impedance control and describe our methodology for synthesizing a safety-critical controller. Finally, we present simulation experiments that validate the proposed approach and conclude with discussions on the results and future work.

\section{Preliminaries}
In this section, we present the theoretical background necessary to understand the proposed safety-critical control framework. We first describe the dynamics of robotic manipulators and then review the concept of Control Barrier Functions (CBFs) as a means to enforce safety. Consider an $n$-degree-of-freedom (DOF) robotic manipulator characterized by the joint positions $q \in \mathbb{R}^n$, joint velocities $\dot{q} \in \mathbb{R}^n$, and joint accelerations $\ddot{q} \in \mathbb{R}^n$. The dynamics of the manipulator can be modeled by the Euler-Lagrange equation,
\begin{equation}
    M(q)\ddot{q} + C(q,\dot{q})\dot{q} + G(q) = \tau,
    \label{eq:dynamics}
\end{equation}
where $M(q)$ denotes the positive definite inertia matrix, $C(q,\dot{q})$ represents the Coriolis and centrifugal forces, $G(q)$ is the gravitational torque vector, and $\tau$ is the vector of control inputs (joint torques). This equation forms the basis for our subsequent control design. Control Barrier Functions are continuously differentiable functions $h(x)$ defined on the state space $x \in \mathbb{R}^n$, which in turn characterize a safe set $\mathcal{S} = \{ x \mid h(x) \geq 0 \}$. The core idea is that if the system trajectory starts within this safe set, then by enforcing the condition 
\begin{equation}
    \dot{h}(x) + \alpha(h(x)) \geq 0,
    \label{eq:cbf_condition}
\end{equation}
where $\alpha(\cdot)$ is an extended class-$\mathcal{K}$ function, the state will remain in $\mathcal{S}$ for all future times. For a control-affine system described by $\dot{x} = f(x) + g(x) u$, this inequality provides a constraint on the control input $u$ such that the safety of the system is guaranteed. 

The collision cone concept provides a geometrically intuitive means of characterizing potential collisions. For a manipulator, let $p_m \in \mathbb{R}^3$ denote the position of the end-effector and $p_o \in \mathbb{R}^3$ denote the position of an obstacle. The relative position is defined as $p_{rel} = p_m - p_o$, and the relative velocity is $v_{rel} = \dot{p}_m - \dot{p}_o$. The collision cone is then the set of all relative velocity vectors that, if maintained, would lead the end-effector to collide with the obstacle. We define a candidate Collision Cone Control Barrier Function (C3BF)\cite{tayal2024collision} as
\begin{equation}
    h(x,t) = \langle p_{rel}, v_{rel} \rangle + \|p_{rel}\|\|v_{rel}\|\cos\phi,
    \label{eq:c3bf_candidate}
\end{equation}
where $\phi$ is the half-angle of the collision cone determined by the geometry of the obstacle and the manipulator. The condition $h(x,t) \geq 0$ ensures that the relative velocity is directed away from the collision cone, thereby preventing a collision. In addition, Cartesian impedance control is employed to manage the interaction forces between the manipulator and its environment. The impedance control law is expressed as
\begin{equation}
    \Lambda(\ddot{x} - \ddot{x}_{des}) + D(\dot{x} - \dot{x}_{des}) + K(x - x_{des}) = f_{ext},
    \label{eq:impedance}
\end{equation}
where $x \in \mathbb{R}^3$ is the end-effector position, $\Lambda$, $D$, and $K$ denote the desired inertia, damping, and stiffness matrices respectively, $x_{des}$, $\dot{x}_{des}$, and $\ddot{x}_{des}$ represent the desired position, velocity, and acceleration, and $f_{ext}$ is the external force. The goal is to achieve compliant behavior while ensuring that the safety constraints derived from the collision cone approach are respected.

\section{Problem Formulation}
The objective of the proposed framework is to design a safety filter that integrates seamlessly with a Cartesian impedance controller in order to prevent collisions with dynamic obstacles while preserving the manipulator's tracking and compliant behaviors. Initially, a nominal control input is generated using Cartesian impedance control, which computes the desired acceleration and corresponding joint torques based on the error between the actual and desired end-effector trajectories. However, this nominal control input may not inherently guarantee safety when obstacles are present. To enforce safety, we define the safe set by the C3BF candidate given in Equation~\eqref{eq:c3bf_candidate}. Ensuring that the system remains in the safe set requires that the time derivative of the C3BF satisfies $\dot{h}(x,t) + \alpha(h(x,t)) \geq 0$. This safety condition is then incorporated into a Quadratic Program (QP) whose objective is to compute the final control input that deviates minimally from the nominal input while guaranteeing the satisfaction of the safety constraint. In essence, the final control command is obtained by solving
\begin{align}\label{eq:qp_formulation}
    \min_u & \quad \| u - u_{nom} \|^2 \\
    \text{s.t.} & \quad \dot{h}(x) + \alpha(h(x)) \geq 0.
\end{align}
By solving this QP in real time, the controller ensures that the manipulator’s end-effector avoids collision even when obstacles move unpredictably, while the nominal Cartesian impedance control ensures compliance and trajectory tracking.

\section{Methodology}
The methodology involves several key steps that culminate in the design of a safety-critical controller. First, the collision cone CBF candidate defined in Equation~\eqref{eq:c3bf_candidate} is derived by considering the relative position and velocity between the end-effector and an obstacle. The function is designed such that it remains positive when the relative velocity vector lies outside the collision cone, thereby ensuring safety. To utilize this candidate in control design, we differentiate it with respect to time. The derivative involves the term $\langle v_{rel}, v_{rel} \rangle$, as well as the derivative of the product $\|p_{rel}\|\|v_{rel}\|\cos\phi$, which captures the geometric evolution of the collision cone as both the manipulator and obstacle move. An appropriate extended class-$\mathcal{K}$ function, commonly selected as $\alpha(h) = \gamma h$ with $\gamma > 0$, is then applied to the derivative to enforce the safety condition in a manner that is proportional to the level of safety violation.

The second step is to integrate the safety constraint with the nominal Cartesian impedance control law given in Equation~\eqref{eq:impedance}. The impedance controller is responsible for generating a nominal control input that drives the end-effector towards a desired trajectory while ensuring compliant interaction with the environment. This nominal control input is typically derived using inverse dynamics or resolved motion rate control, converting task-space commands into joint-space torques. The third and final step involves the formulation of a Quadratic Program that minimally alters the nominal control input only when the safety constraint is active. The QP, as shown in Equation~\eqref{eq:qp_formulation}, is solved at each control cycle and returns the optimal control input that satisfies the safety constraint while remaining as close as possible to the nominal command. This approach guarantees that the safety filter is only activated when necessary, thereby preserving the original performance of the impedance controller under normal conditions.

\begin{theorem}[Validity of the Collision Cone CBF for a Manipulator End-Effector]
Consider a manipulator end‐effector whose dynamics are given by
\[
\dot{p}_m = v_m,\quad \dot{v}_m = u,
\]
where \(p_m\in\mathbb{R}^3\) denotes the end‐effector position, \(v_m\in\mathbb{R}^3\) its velocity, and \(u\in\mathbb{R}^3\) is the control input. Let an obstacle be moving with constant velocity \(v_o\) (i.e., \(\dot{p}_o=v_o\) and \(\dot{v}_o=0\)) with known position \(p_o\). Define the relative position and velocity as
\[
p_{rel} = p_m - p_o,\quad v_{rel} = v_m - v_o.
\]
Assume that the safety requirement is that the Euclidean distance satisfies \(\|p_{rel}\| > r\), where \(r>0\) is a given safety margin. Define the candidate Collision Cone Control Barrier Function (C3BF) as
\[
h(x,t)=\langle p_{rel}, v_{rel}\rangle + \|p_{rel}\|\|v_{rel}\|\cos\phi,
\]
where \(\cos\phi\) is defined as
\[
\cos\phi = \frac{\sqrt{\|p_{rel}\|^2 - r^2}}{\|p_{rel}\|},
\]
so that \(\phi\) represents the half-angle of the collision cone. Then, under the assumption that \(\|p_{rel}\|>r\), there exists a locally Lipschitz control law \(u(x)\) such that
\[
L_fh(x) + L_gh(x)u(x) + \frac{\partial h}{\partial t} \geq -\alpha(h(x)),
\]
for an appropriate extended class-\(\mathcal{K}\) function \(\alpha\). In other words, the candidate \(h(x,t)\) is a valid control barrier function on the set
\[
D=\{x\mid \|p_{rel}\|>r\}.
\]
\end{theorem}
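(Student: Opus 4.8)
The plan is to verify the two ingredients that make $h$ a valid control barrier function on $D$: (i) pointwise feasibility of the constraint $L_fh + L_gh\,u + \partial h/\partial t \ge -\alpha(h)$ in $u$ at every point of $D$, and (ii) existence of a feasible selection $u(x)$ that is locally Lipschitz. Since the obstacle moves at constant velocity, it is cleanest to pass to the relative coordinates $(p_{rel},v_{rel})$, in which the dynamics become the autonomous control-affine system $\dot p_{rel}=v_{rel}$, $\dot v_{rel}=u$, with $f=(v_{rel},0)$ and $g=(0,I_3)$. In these coordinates $h$ has no explicit time dependence, so the $\partial h/\partial t$ term written in the theorem is absorbed once the known obstacle motion is substituted, and it suffices to study $\dot h = L_fh + L_gh\,u$.

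First I would rewrite the barrier in a more tractable form. Because $\|p_{rel}\|\cos\phi=\sqrt{\|p_{rel}\|^2-r^2}=:\rho(p_{rel})$, we have $h=\langle p_{rel},v_{rel}\rangle+\rho\,\|v_{rel}\|$, which is smooth on $D\cap\{v_{rel}\neq 0\}$. Differentiating term by term (using $\dot\rho=\langle p_{rel},v_{rel}\rangle/\rho$ and $\frac{d}{dt}\|v_{rel}\|=\langle v_{rel},u\rangle/\|v_{rel}\|$) gives
\[
\dot h=\|v_{rel}\|^2+\frac{\langle p_{rel},v_{rel}\rangle}{\rho}\,\|v_{rel}\|+\Big\langle\, p_{rel}+\tfrac{\rho}{\|v_{rel}\|}\,v_{rel},\;u\,\Big\rangle ,
\]
so that $L_gh=\big(p_{rel}+\tfrac{\rho}{\|v_{rel}\|}v_{rel}\big)^{\top}$ and $L_fh$ collects the remaining drift terms; I would double-check each derivative but not reproduce the algebra.

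The crux is to show $L_gh\neq 0$ on all of $D\cap\{v_{rel}\neq 0\}$, which renders the constraint feasible for \emph{every} extended class-$\mathcal K$ function $\alpha$, since one may then choose $u$ anywhere in the open half-space $\{u: L_gh\,u > -L_fh-\alpha(h)\}$. Suppose for contradiction that $L_gh=0$ at some point of $D$; then $p_{rel}=-\tfrac{\rho}{\|v_{rel}\|}v_{rel}$, so $p_{rel}$ and $v_{rel}$ are antiparallel and, taking norms, $\|p_{rel}\|=\rho=\sqrt{\|p_{rel}\|^2-r^2}$, forcing $r=0$ and contradicting $r>0$. Hence $L_gh$ is a nonvanishing smooth row field there, and I would exhibit the explicit min-norm (QP) selection
\[
u^{\star}(x)=u_{nom}(x)+\frac{\big[-\big(L_fh+\gamma h+L_gh\,u_{nom}\big)\big]_{+}}{\|L_gh\|^2}\,(L_gh)^{\top},\qquad \gamma>0,
\]
which satisfies the required inequality by construction and is locally Lipschitz on $D\cap\{v_{rel}\neq 0\}$ because $L_fh,L_gh$ are smooth there, $\|L_gh\|$ is bounded away from zero on compact subsets, and $[\,\cdot\,]_{+}$ is Lipschitz — this is the standard CBF-QP regularity argument.

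The only genuinely delicate point is the locus $v_{rel}=0$ inside $D$, where $\|v_{rel}\|$ is nondifferentiable and $h$ fails to be classically differentiable; there $h=0$, i.e. the state lies on $\partial\{h\ge 0\}$, but the end-effector and obstacle are instantaneously comoving so no collision is imminent. I would handle this either by restricting the analysis to $D'=D\cap\{v_{rel}\neq 0\}$, as is customary in the C3BF literature, or via a one-sided (Dini) derivative argument showing a feasible $u$ still exists on that set. I expect this edge case, rather than the Lie-derivative computation, to be where the statement needs the most care; everything else reduces to the direct differentiation above together with the textbook feasibility-and-Lipschitz conclusion once $L_gh\neq 0$ is established.
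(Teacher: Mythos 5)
Your proposal is correct, and it follows the same skeleton as the paper's proof (pass to relative dynamics, differentiate $h$, split into $L_fh + L_gh\,u$, argue feasibility and take the QP/min-norm selection), but it actually carries out the two steps the paper leaves as assertions. Where the paper says only that the second term ``can be differentiated using the product and chain rules'' and that $L_gh$ ``is nonzero'' because the manipulator is fully actuated, you simplify $\|p_{rel}\|\cos\phi$ to $\rho=\sqrt{\|p_{rel}\|^2-r^2}$, compute $L_gh = \bigl(p_{rel}+\tfrac{\rho}{\|v_{rel}\|}v_{rel}\bigr)^{\top}$ explicitly, and prove nonvanishing by the antiparallel/norm contradiction ($L_gh=0$ would force $\|p_{rel}\|=\rho$, i.e.\ $r=0$). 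This is the right fix: full actuation alone does not imply $L_gh\neq 0$ (the gradient of $h$ with respect to $v_{rel}$ could in principle vanish regardless of how the input enters), so your argument closes a genuine gap in the paper's reasoning rather than merely restating it, and it matches the standard argument in the C3BF literature the paper cites. Your explicit min-norm controller with the $[\cdot]_+$ projection, together with the usual regularity reasoning ($L_fh,L_gh$ smooth, $\|L_gh\|$ bounded away from zero on compacts), also supplies the locally Lipschitz control law that the theorem statement demands but the paper never exhibits. Finally, you are right that the only delicate locus is $v_{rel}=0$, where $\|v_{rel}\|$ is nondifferentiable and $h=0$; the paper is silent on this, and restricting to $D\cap\{v_{rel}\neq 0\}$ (or a one-sided derivative argument) is the accepted way to handle it. In short: same route, but your version is the complete one; the only caveat is that the theorem as stated on $D=\{\|p_{rel}\|>r\}$ implicitly needs that restriction or regularization at $v_{rel}=0$, which you correctly flag rather than gloss over.
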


\begin{proof}
Let the state of the end-effector be \(x=[p_m^T,\,v_m^T]^T\), so that the dynamics can be written in the control-affine form
\[
\dot{x} = f(x) + g(x)u,
\]
with
\[
f(x)=\begin{bmatrix} v_m \\ 0 \end{bmatrix} \quad \text{and} \quad g(x)=\begin{bmatrix} 0 \\ I_{3\times3} \end{bmatrix}.
\]
Since the obstacle moves with constant velocity, its acceleration is zero, and thus the relative dynamics are given by
\[
\dot{p}_{rel} = v_{rel},\quad \dot{v}_{rel} = u.
\]
By definition, the candidate CBF is 
\[
h(x,t)=\langle p_{rel}, v_{rel}\rangle + \|p_{rel}\|\|v_{rel}\|\cos\phi,
\]
where the function \(\cos\phi\) is determined solely by the geometry of the safety margin \(r\) and the current relative position, namely, 
\[
\cos\phi=\frac{\sqrt{\|p_{rel}\|^2-r^2}}{\|p_{rel}\|}.
\]
Differentiating \(h(x,t)\) with respect to time yields
\[
\dot{h}(x,t)=\frac{d}{dt}\langle p_{rel}, v_{rel}\rangle + \frac{d}{dt}\left(\|p_{rel}\|\|v_{rel}\|\cos\phi\right).
\]
The first term differentiates as 
\[
\frac{d}{dt}\langle p_{rel}, v_{rel}\rangle = \langle \dot{p}_{rel}, v_{rel}\rangle + \langle p_{rel}, \dot{v}_{rel}\rangle = \|v_{rel}\|^2 + \langle p_{rel}, u \rangle.
\]
The second term, which involves the product \(\|p_{rel}\|\|v_{rel}\|\cos\phi\), can be differentiated using the product and chain rules. Although the expression is somewhat lengthy, all terms depend on \(p_{rel}\) and \(v_{rel}\) (and their derivatives), and crucially, the control input \(u\) appears linearly through \(\dot{v}_{rel}=u\). Consequently, the time derivative of \(h(x,t)\) can be written in the form
\[
\dot{h}(x,t)=L_fh(x) + L_gh(x)u,
\]
where \(L_fh(x)\) collects all terms independent of \(u\) and \(L_gh(x)\) is the vector that multiplies \(u\).

Because the manipulator is assumed to be fully actuated in the task space, the term \(L_gh(x)\) is nonzero when the system is near the boundary of the safe set (i.e., when \(\|p_{rel}\|>r\)). This implies that for any state \(x\) in the domain \(D\), one can choose a control input \(u\) that satisfies
\[
L_fh(x) + L_gh(x)u \geq -\alpha\big(h(x,t)\big),
\]
for an appropriate choice of the extended class-\(\mathcal{K}\) function \(\alpha\). Therefore, by the definition of a control barrier function, \(h(x,t)\) is a valid CBF for the set \(D\).
\end{proof}

\begin{figure}
    \centering
    \includegraphics[width=0.7\linewidth]{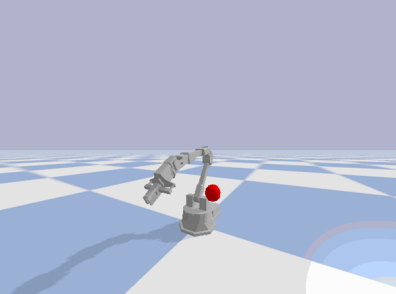}
    \caption{PyBullet Simulation of manipulator and obstacle}
    \label{fig:a1}
\end{figure}

\section{Simulation Experiments}
To evaluate the performance of the proposed safety-critical control scheme, we conducted extensive simulation experiments using PyBullet physics engine \cite{coumans2019}. In our simulation environment, a six-degree-of-freedom robotic manipulator was modeled with realistic dynamic parameters derived from standard Denavit-Hartenberg formulations. The manipulator was controlled using Cartesian impedance control, where the desired end-effector trajectory was specified along with target values for stiffness, damping, and inertia. The simulation also included dynamic obstacles modeled as moving spheres, where the safety margin was defined by a predetermined radius. A perception boundary was incorporated to simulate sensor limitations, such that the safety filter was only activated when an obstacle entered the observable region.

\begin{figure*}[htp]
    \centering
    \begin{subfigure}{.51\textwidth}
      \centering
      \includegraphics[width=.99\linewidth]{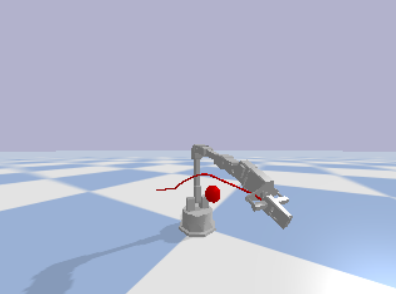}
      \caption{}
      \label{fig:sfig1}
    \end{subfigure}%
    \begin{subfigure}{.49\textwidth}
      \centering
      \includegraphics[width=.97\linewidth]{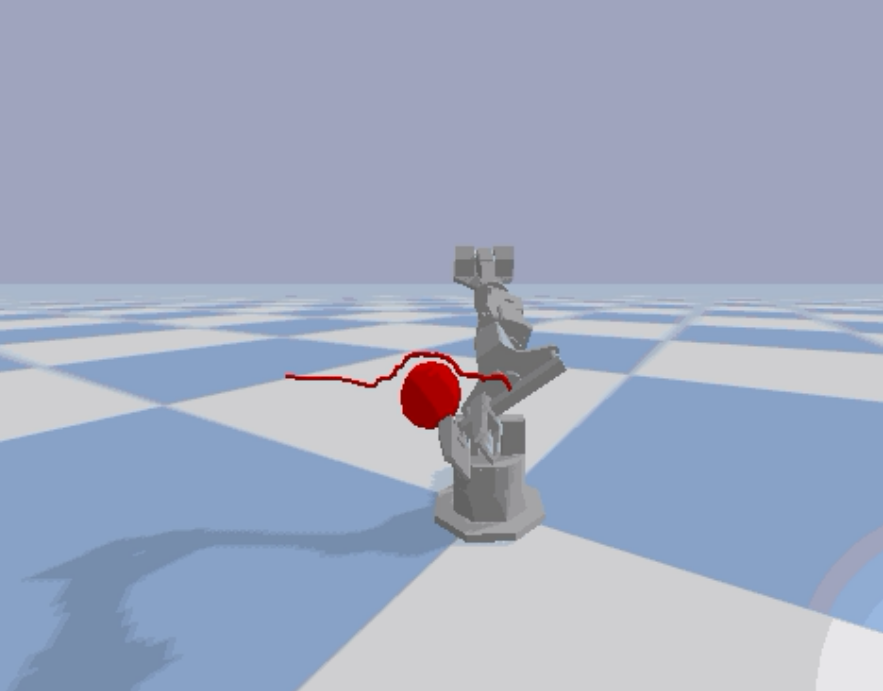}
      \caption{}
      \label{fig:sfig2}
    \end{subfigure}
    \caption{PyBullet Simulation of a Manipulator avoiding the obstacle using distance based CBF}
    \label{fig:a2}
\end{figure*}

The simulation scenarios were carefully designed to test the controller under a variety of conditions. In one scenario, a static obstacle was placed directly in the planned path of the manipulator’s end-effector. As the end-effector approached the obstacle, the QP-based safety filter modified the control input to alter the trajectory and avoid collision, while still striving to maintain the desired impedance behavior. In another scenario, an obstacle moving directly toward the manipulator was introduced. The relative velocity between the end-effector and the obstacle was computed in real time, and the QP adjusted the control commands so that the relative velocity vector was steered away from the collision cone. A further simulation scenario involved an obstacle crossing the manipulator’s path at an angle, which tested the controller’s ability to handle complex and rapidly changing dynamics. Finally, a multi-obstacle scenario was simulated in which several obstacles with varying trajectories were present, challenging the safety filter to coordinate avoidance maneuvers effectively while preserving the tracking performance of the nominal controller.

During the experiments, the QP was solved at a high frequency (approximately 240 Hz) to ensure a rapid response to dynamic changes. The impedance control gains were tuned to achieve a balance between compliant behavior and trajectory tracking, while the extended class-$\mathcal{K}$ function parameter was chosen to provide a robust safety margin. The results of the simulations indicated that the safety filter effectively prevented collisions in all tested scenarios. Moreover, the nominal trajectory was only minimally perturbed, and the additional control effort introduced by the safety filter was kept to a minimum. Detailed plots of the end-effector trajectory, obstacle paths, and evolution of the QP constraint were used to validate the theoretical predictions and demonstrate the overall robustness of the proposed approach.

\section{Results and Discussion}
The simulation experiments confirmed that the integration of Collision Cone Control Barrier Functions with Cartesian impedance control can guarantee safety without significantly compromising performance. The manipulator consistently avoided collisions even when obstacles moved unpredictably, and the modifications to the nominal control input were sufficiently subtle to maintain the desired compliance and tracking accuracy. The high update rate of the QP ensured that the system responded promptly to changes in obstacle dynamics, and the overall control effort remained within acceptable limits. The simulation outcomes provide a strong indication that the proposed method is both robust and effective, paving the way for potential experimental validation on physical robotic platforms.

\section{Conclusion and Future Work}
In this paper, we have developed a detailed and comprehensive safety-critical control framework for robotic manipulators by extending Collision Cone Control Barrier Functions to the realm of compliant manipulation. By integrating these safety constraints with Cartesian impedance control, we have formulated a QP-based safety filter that guarantees collision avoidance while preserving the nominal tracking performance. The extensive simulation experiments demonstrate that our approach is capable of handling a variety of dynamic scenarios, including static and moving obstacles as well as cluttered environments. Future work will focus on experimental validation using physical hardware and on further refining the approach to handle control input saturation and complex multi-obstacle scenarios.
In future work, we plan to investigate neural network-based control barrier functions as demonstrated in \cite{tayal2024learning, zhang2023exact}, and extend our approach by incorporating visuomotor inputs as discussed in \cite{tayal2024semi, harms2024neural} for enhanced safety and adaptability in manipulator control.

\vspace{10em}

\label{section: References}
\bibliographystyle{IEEEtran}
\bibliography{references.bib}

\end{document}